\theoremstyle{plain}
\newtheorem{theorem}{Theorem}[section]
\theoremstyle{definition}
\theoremstyle{remark}
\def\eps{\varepsilon}
\def \la{\langle} \def\ra{\rangle}
\DeclareMathOperator*{\argmin}{arg\,min}
\DeclareMathOperator*{\arginf}{arg\,inf}
\DeclareMathOperator*{\LogSumExp}{LogSumExp}
\def\cC{\mathcal{C}}
\def\cL{\mathcal{L}}
\def\cP{\mathcal{P}}
\def\cR{\mathcal{R}}
\def\cU{\mathcal{U}}
\def\cX{\mathcal{X}}
\def\cY{\mathcal{Y}}
\def\sR{{\mathbb R}}
\def\rmd{\mathrm{d}}
\def\bC{{\mathbf{C}}}
\def\bK{{\mathbf{K}}}
\def\ba{{\mathbf{a}}}
\def\bb{{\mathbf{b}}}
\def\bof{{\mathbf{f}}}
\def\bg{{\mathbf{g}}}
\def\quadcost{\frac{1}{2} \lVert x-y \rVert^2}
\def\half{\frac{1}{2}}
\icmltitlerunning{Differentiable Cost-Parameterized Monge Map Estimators}
\begin{document}

\twocolumn[

\icmltitle{Differentiable Cost-Parameterized Monge Map Estimators}



\icmlsetsymbol{equal}{*}

\begin{icmlauthorlist}
\icmlauthor{Samuel Howard}{sch}
\icmlauthor{George Deligiannidis}{sch}
\icmlauthor{Patrick Rebeschini}{sch}
\icmlauthor{James Thornton}{sch,comp}

\end{icmlauthorlist}

\icmlaffiliation{comp}{Apple}
\icmlaffiliation{sch}{Department of Statistics, University of Oxford}

\icmlcorrespondingauthor{Samuel Howard}{howard@stats.ox.ac.uk}

\icmlkeywords{Machine Learning, ICML}

\vskip 0.3in
]



\printAffiliationsAndNotice{}  

\begin{abstract}
Within the field of optimal transport (OT), the choice of ground cost is crucial to ensuring that the optimality of a transport map corresponds to usefulness in real-world applications.
It is therefore desirable to use known information to tailor cost functions and hence learn OT maps which are adapted to the problem at hand.
By considering a class of neural ground costs whose Monge maps have a known form, we construct a differentiable Monge map estimator which can be optimized to be consistent with known information about an OT map.
In doing so, we simultaneously learn both an OT map estimator and a corresponding adapted cost function.
Through suitable choices of loss function, our method provides a general approach for incorporating prior information about the \textit{Monge map itself} when learning adapted OT maps and cost functions.
\end{abstract}

\section{Introduction}

\textbf{Optimal transport.}
Mapping samples between two datasets in a meaningful way is one of the most fundamental tasks across the sciences. Optimal transport (OT) provides a principled framework for such mapping problems, and has enjoyed success in many fields including throughout biology \citep{Schiebinger2019}, and extensively in machine learning  with applications in generative modelling \citep{de2021diffusion, genevay2018learning}, differentiable sorting \citep{cuturi2019differentiable}, clustering \citep{genevay2019differentiable}, resampling \citep{corenflos2021differentiable} and self supervised learning \citep{caron2020unsupervised}.

For two probability measures $\mu$ and $\nu$ on $\sR^d$, the OT problem finds the most efficient way to transport $\mu$ to $\nu$ relative to a ground cost function $ c : \sR^d \times \sR^d \rightarrow \sR$.
The choice of cost function therefore determines the notion of optimality.

\textbf{Choice of Ground Cost.} 
The squared-Euclidean distance $c(x,y) = \quadcost$ is the default and most commonly used ground cost in computational OT methods, due to its desirable theoretical properties and ease of implementation. It allows the use of the closed-form transport map from Brenier's theorem \citep{Brenier1987}, and an elegant connection to convex analysis upon which many methods are based \citep{Makkuva2020, korotinW2GN}. However, the squared-Euclidean cost can be an arbitrary choice and may not be suitable for the given problem \citep{genevay2018learning}. Moreover, it can result in erroneous mappings that do not agree with known ground-truth information \citep{somnath2023aligned}, as demonstrated in Figure \ref{fig:inverse_OT_2d}. Recently, several works have proposed methods to estimate OT maps for more general ground costs \citep{fan2023neural, asadulaev2023neural, monge-bregman-occam2023, monge-gap}, however it remains unclear how such a cost should be chosen.
\begin{figure}[t]
        \centering
        \includegraphics[width=\columnwidth]{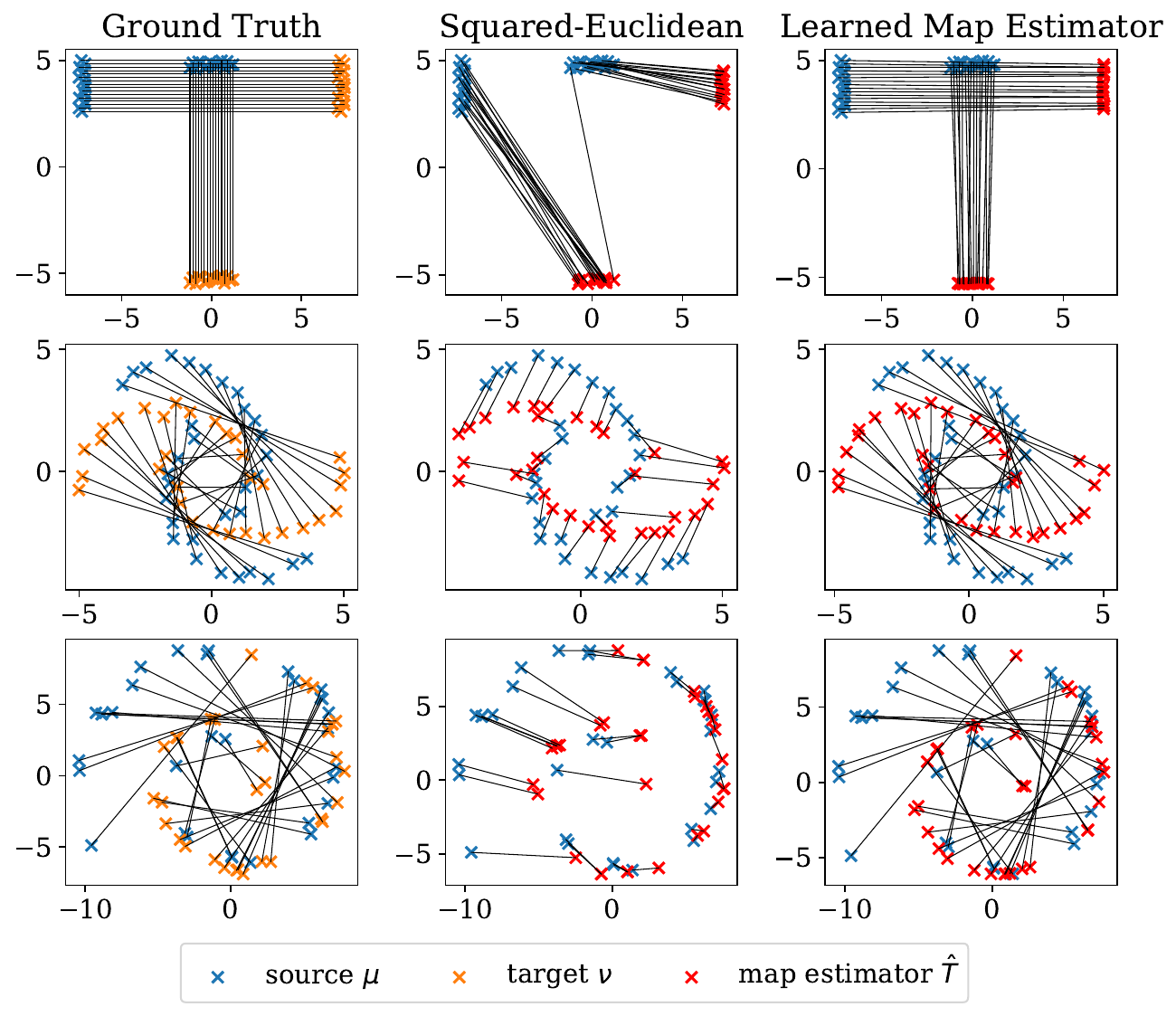}
        \vspace{-0.5cm}
        \caption{Using the squared-Euclidean cost can result in incorrect map estimators that do not align with known ground-truth mappings. By optimizing a differentiable cost-parameterized Monge map estimator to resemble known information, we can obtain Monge map estimators and corresponding cost functions which are consistent with the ground-truth.}
        \label{fig:inverse_OT_2d}
        \vspace{-0.2cm}
\end{figure}

\textbf{Cost Learning.} Ambiguity surrounding the choice of ground cost has motivated recent interest in learning adapted cost functions by leveraging assumed additional information about the transport map (see Appendix \ref{sec:cost learning} for a discussion of related work). A frequently studied setting is inverse OT, which aims to learn cost functions from paired samples assumed to come from a transport coupling.
Access to completely paired datasets is however unlikely to arise in practical applications. Instead, we may aim to learn an improved cost function from partial information about the mapping, such as a limited number of paired points or known structure in the map displacements. This motivates a need for general cost learning frameworks that are able to learn from a variety of types of available information.

One general approach for learning cost functions is a bi-level approach entailing solving a regularized OT problem for given cost function with Sinkhorn's algorithm, then differentiating through this solution to optimize the cost (see Appendix \ref{sec:sinkhorn differentiation}). In many OT applications however, one wishes to approximate the Monge map $T^\star$ itself, rather than the cost function alone. This can be used to transport out-of-sample points, enabling applications such as generative modelling and prediction \citep{CellOT}.
While a learned cost can be plugged-in to an OT map estimator, if an accurate approximation is not obtained then the resulting estimator may not agree with the information used to learn the cost.

\textbf{Contributions.}
We introduce an alternative approach for learning adapted cost functions and OT maps by instead \textit{optimizing a cost-parameterized Monge map estimator directly} to be consistent with known information. 
This approach \textit{simultaneously} learns a map estimator \textit{and} a corresponding cost function, ensuring that the resulting map has the desired properties.
We make the following contributions:
\begin{itemize}
  \setlength{\parskip}{0pt}
  \setlength{\itemsep}{0pt plus 2pt}
    \item We propose a differentiable structured Monge map estimator, incorporating costs $c(x,y) = h(x-y)$ for strictly convex $h$. We parameterize $h$ using an Input Convex Neural Network (ICNN) \citep{amos_ICNN}, $h_\theta$, and use the differentiable entropic map estimator \citep{pooladian2022entropic, monge-bregman-occam2023} to facilitate gradient based training.
    \item We then extend our construction to costs $c(x,y) = h(\Phi_\mu(x) - \Phi_\nu(y))$ for invertible functions $\Phi_{\cdot}$, providing additional flexibility whilst retaining a structured Monge map.
    \item We showcase how our approach can incorporate partially known information, including aligning mapping estimators with known data associations, and encouraging desirable properties in the resulting transport map.
\end{itemize}

\section{Background on Optimal Transport}
\textbf{Monge.} The original OT formulation given by \citet{monge1781} seeks a map $T^\star$ minimizing the total transportation cost amongst maps $T:\sR^d \rightarrow \sR^d$ pushing $\mu$ onto $\nu$:
\begin{equation} \label{eq:monge}
    \min_{T: T\#\mu = \nu} \int_{\sR^d} c(x, T(x)) \rmd \mu(x).
\end{equation}
\textbf{Kantorovich.} A more general formulation, permitting mass splitting, by \citet{kantorovich1942} seeks a joint distribution $\pi^\star \in \cP(\sR^d \times \sR^d)$ minimizing the cost over the set of couplings $\Gamma(\mu,\nu)$ between marginals $\mu$ and $\nu$: 
\begin{equation} \label{eq:kantorovich}
    \min_{\pi \in \Gamma(\mu,\nu)} \iint_{\sR^d \times \sR^d} c(x,y) \rmd \pi(x,y).
\end{equation}
The Kantorovich formulation admits the following dual formulation,
where $\mathcal{R}_c = \{(f,g) \in L^1(\mu)\times L^1(\nu) : f(x)+g(y) \leq c(x,y) ~ \mu \otimes \nu \text{~a.e.} \}$. The solutions $(f^\star,g^\star)$ to the dual problem are known as the \textit{Kantorovich potentials},
\begin{align}\label{eq:Kantorovich dual v1}
    \sup_{(f,g) \in \mathcal{R}_c} \bigg\{ \int f d\mu + \int g d\nu \bigg\}.
\end{align}

\textbf{Closed-form Monge maps.}
The following theorem details sufficient conditions for equivalence of the Monge and Kantorovich problems, and motivates solving for the Kantorovich potentials to obtain the Monge mapping.

\begin{theorem}[Theorem 1.17, \cite{santambrogio2015optimal}] \label{th:Gangbo-McCann}
    For measures $\mu$ and $\nu$ on a compact domain $\Omega \subset \sR^d$ and a cost of the form $c(x,y) = h(x-y)$ for a strictly convex function $h$, there exists an optimal plan $\pi^\star$ for the Kantorovich problem. If $\mu$ is absolutely continuous and $\partial \Omega$ is negligible, then this optimal plan is unique and of the form $(Id, T^\star)\# \mu$, there exists a Kantorovich potential $f^\star$, and
    \begin{equation}\label{eq:Gangbo-McCann}
        T^\star(x) = x - (\nabla h)^{-1}(\nabla f^\star(x)).
    \end{equation}
\end{theorem}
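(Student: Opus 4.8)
The plan is to follow the classical duality-to-structure argument: produce an optimal Kantorovich plan by compactness, extract a regular pair of Kantorovich potentials, and then exploit a first-order optimality condition to show the plan is carried by a single graph.

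\textbf{Existence of $\pi^\star$.} Since $\Omega$ is compact, $\Gamma(\mu,\nu)$ is a weakly compact subset of $\cP(\Omega \times \Omega)$ (it is tight and weakly closed). A finite convex function $h$ on $\sR^d$ is continuous, so $c(x,y) = h(x-y)$ is continuous and bounded on $\Omega \times \Omega$, hence $\pi \mapsto \iint c\, \rmd\pi$ is weakly continuous. The direct method of the calculus of variations then gives a minimizer $\pi^\star$ of \eqref{eq:kantorovich}.

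\textbf{Regular potentials.} Invoke Kantorovich duality: the primal and dual values coincide and the dual supremum is attained by a $c$-conjugate pair, $f^\star(x) = \inf_{y \in \Omega}\{c(x,y) - g^\star(y)\}$ and symmetrically for $g^\star$. Because $h$ is locally Lipschitz and $\Omega$ is compact, $x \mapsto c(x,y)$ is Lipschitz with a modulus uniform in $y$, so $f^\star$, being an infimum of uniformly Lipschitz functions, is Lipschitz on $\Omega$. By complementary slackness, $f^\star(x) + g^\star(y) = c(x,y)$ for $\pi^\star$-a.e. $(x,y)$.

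\textbf{From potentials to the map.} For $\pi^\star$-a.e. pair $(x_0, y_0)$ we may additionally assume $x_0 \in \mathrm{int}\,\Omega$ (as $\mu$ is absolutely continuous and $\partial\Omega$ is negligible) and that $f^\star$ is differentiable at $x_0$ (Rademacher, since $\mu \ll \mathrm{Leb}$ and $f^\star$ is Lipschitz). The constraint $f^\star(x) - h(x - y_0) \le -g^\star(y_0)$ holds for all $x \in \Omega$, with equality at $x = x_0$, so $x \mapsto f^\star(x) - h(x - y_0)$ attains an interior maximum at $x_0$, yielding the first-order condition $\nabla f^\star(x_0) = \nabla h(x_0 - y_0)$. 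Strict convexity of $h$ makes $\nabla h$ injective, so $(\nabla h)^{-1}$ is well defined on its range and
\[
    y_0 = x_0 - (\nabla h)^{-1}(\nabla f^\star(x_0)).
\]
Hence $\pi^\star$ is concentrated on the graph of $T^\star(x) := x - (\nabla h)^{-1}(\nabla f^\star(x))$, so $\pi^\star = (\mathrm{Id}, T^\star)\#\mu$; in particular $T^\star\#\mu = \nu$, and since $\pi^\star$ is Kantorovich-optimal and induced by a map, $T^\star$ solves \eqref{eq:monge} with the same value. For uniqueness, the argument used only optimality of $\pi^\star$, so every optimal plan is carried by the same graph, and a coupling of $\mu$ whose disintegration is $x \mapsto \delta_{T^\star(x)}$ is unique.

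\textbf{Main obstacle.} The delicate part is the regularity bookkeeping that turns the first-order condition into an honest equality rather than a one-sided inequality: one must be able to choose the potentials $c$-concave and verify that $c$-concavity inherits Lipschitz regularity from $h$ on the compact $\Omega$, and then discard the two bad sets — the non-differentiability set of $f^\star$ and $\partial\Omega$ — which is exactly where absolute continuity of $\mu$ and negligibility of $\partial\Omega$ enter. If $h$ is strictly convex but not assumed differentiable, the same scheme runs with the subdifferential $\partial h$ in place of $\nabla h$: one obtains $\nabla f^\star(x_0) \in \partial h(x_0 - y_0)$, and strict convexity forces $(\partial h)^{-1}$ to be single-valued on its domain, recovering \eqref{eq:Gangbo-McCann}.
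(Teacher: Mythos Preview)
The paper does not give its own proof of this statement: Theorem~\ref{th:Gangbo-McCann} is quoted verbatim from \citet[Theorem 1.17]{santambrogio2015optimal} and used as a black box, so there is nothing in the paper to compare against. Your argument is the standard one from that reference --- existence by the direct method on the compact $\Omega$, extraction of a $c$-concave pair of potentials that inherit Lipschitz regularity from $h$, Rademacher plus absolute continuity of $\mu$ to discard the non-differentiability set, and the first-order condition at an interior maximum to pin down $y_0$ via injectivity of $\nabla h$ (or single-valuedness of $(\partial h)^{-1}$ under strict convexity). The handling of the two null sets and the subdifferential variant in your final paragraph are exactly the points Santambrogio is careful about, so your write-up is faithful to the cited proof.
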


\textbf{Entropic OT.} The entropic OT problem instead smooths the transport plan by adding an entropic penalty term to \eqref{eq:kantorovich},
\begin{equation*}
    \pi^\star = \argmin_{\pi \in \Gamma(\mu,\nu)} \iint_{\sR^d \times \sR^d} c(x,y) \rmd \pi(x,y) + \eps KL(\pi | \mu \otimes \nu).
\end{equation*}
This relaxes the constraints on the dual potentials. The solutions $(f_\eps^\star,g_\eps^\star)$ to the dual problem are known as the \textit{entropic potentials},
\begin{align}\label{eq:entropic dual}
    \sup_{\substack{f \in \cC(\cX) \\ g \in \cC(\cY)}} &\bigg\{ \int f \rmd \mu + \int g \rmd \nu +R(f,g) \bigg\},
\end{align}
where $R(f,g):=- \eps \iint e^\frac{f(x)+g(y) - c(x,y)}{\eps} \rmd \mu(x) \rmd \nu(y)$. 

Entropic OT approaches and hence approximates OT in the limit as $\eps \searrow 0$, but the entropic solution enjoys the key benefit of differentiability with respect to the inputs and enables efficient computation for discrete measures $\hat{\mu}, \hat{\nu}$ using Sinkhorn's algorithm \citep{Cuturi2013}.

\section{A Differentiable Monge Map Estimator}
We consider learning costs of the form $c(x,y) = h_\theta(x-y)$ for a strictly convex $h_\theta$, for which the Monge map has the form in \eqref{eq:Gangbo-McCann}. This enables direct optimization according to conditions on the map itself, ensuring that the resulting mapping is consistent with known prior information.

Such costs allow the use of methods such as the entropic mapping estimator \citep{pooladian2022entropic, monge-bregman-occam2023} and $c$-rectified flow \citep{liu2022_OT_rectified}, ensuring reliable estimation of the corresponding OT map. In contrast, learning OT map estimators for arbitrary costs requires a trade-off between distribution fitting and optimality (see Appendix \ref{sec:neural OT}), and if the resulting mapping is inaccurate it may not be consistent with the information from which the cost was learned. Moreover, additional known information about an OT map will not uniquely determine a cost function. By considering convex costs, we provide an inductive bias towards simpler and more interpretable costs.

\subsection{Differentiable, Cost-Parameterized Transport Maps}
We parameterize $h_\theta$ using an Input Convex Neural Network (ICNN) \citep{amos_ICNN}. We also enforce $\alpha$-strong convexity and (optionally) symmetry of $h_\theta$ (see Appendix \ref{sec:cost function parameterization}).
We use the entropic potential $\hat{f}_\theta^\eps$ \citep{pooladian2022entropic, monge-bregman-occam2023} for the discrete empirical measures $\hat{\mu}, \hat{\nu}$ as a differentiable proxy for the Kantorovich potential $f^\star$.
This is constructed from the Sinkhorn potential $\bg_\eps^\star$ solving the discrete OT problem between $\hat{\mu}, \hat{\nu}$:
\begin{equation}\label{eq:general entropic potential}
    \hat{f}_\theta^\eps(x) = - \eps \log \sum_j \exp{\bigg(\frac{(\bg_\eps^\star)_j - h_\theta(x-y_j)}{\eps} \bigg)}.
\end{equation}
With \eqref{eq:Gangbo-McCann}, this gives the entropic mapping estimator
\begin{equation}
    T_\theta^\eps(x) = x - (\nabla h_\theta)^{-1} (\nabla \hat{f}_\theta^\eps (x)).
\end{equation}
\textbf{Differentiability.} We can differentiate through the output of Sinkhorn's algorithm using implicit differentiation \citep{luise2018differential}, or by unrolling the iterates \citep{adams2011ranking, Flamary_2018_W_discriminant}.
We also note the following well-known relation, enabling differentiation through $(\nabla h)^{-1}$.
\begin{restatable}{proposition}{gradinv}
    \vspace{-0.2cm}
    For a strictly convex function $h$, we have
    \begin{equation}\label{eq:argmin for grad_h_inv}
        (\nabla h)^{-1}(x) = \argmin_z \big\{ h(z) - \la z,x \ra \big\}.
    \end{equation}
    \vspace{-0.5cm}
\end{restatable}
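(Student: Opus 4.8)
The plan is to combine Fermat's rule (first-order optimality) with the injectivity of the gradient of a strictly convex function, so that the stationarity equation $\nabla h(z) = x$ simultaneously characterizes the minimizer of the objective and the value $(\nabla h)^{-1}(x)$.

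First I would set $g_x(z) := h(z) - \la z, x\ra$ and observe that adding the affine term $-\la z,x\ra$ preserves strict convexity, so $g_x$ is strictly convex and hence has at most one global minimizer. Assuming $h$ is differentiable (so that $(\nabla h)^{-1}$ is even meaningful), $g_x$ is differentiable with $\nabla g_x(z) = \nabla h(z) - x$, and for a convex differentiable function the stationarity condition $\nabla g_x(z) = 0$ is both necessary and sufficient for $z$ to be a global minimizer. Therefore $z$ minimizes $g_x$ if and only if $\nabla h(z) = x$.

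It remains to identify the solution set of $\nabla h(z) = x$ with the single point $(\nabla h)^{-1}(x)$. Strict convexity of $h$ yields strict monotonicity of its gradient, i.e. $\la \nabla h(z) - \nabla h(z'), z - z'\ra > 0$ for $z \neq z'$, so $\nabla h$ is injective; thus there is at most one $z$ with $\nabla h(z) = x$, and when it exists it is by definition $(\nabla h)^{-1}(x)$. Combining with the previous step gives $\argmin_z\{h(z) - \la z,x\ra\} = (\nabla h)^{-1}(x)$.

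The only point that genuinely needs care — rather than routine convex-analysis bookkeeping — is existence: one must know that $x$ lies in the range of $\nabla h$, so that $(\nabla h)^{-1}(x)$ is defined and the infimum of $g_x$ is actually attained. This is implicit in the statement, and in the paper's setting it holds because the ICNN cost $h_\theta$ is taken to be $\alpha$-strongly convex, which forces $g_x$ to be coercive and $\nabla h_\theta$ to be a bijection of $\sR^d$; I would mention this as the reason the estimator is well-posed rather than belabor it in the proof itself.
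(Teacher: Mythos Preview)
Your proposal is correct and follows essentially the same approach as the paper: define $g_x(z)=h(z)-\la z,x\ra$, note its strict convexity, apply the first-order optimality condition $\nabla h(z)=x$, and invert. Your added remarks on injectivity of $\nabla h$ and on existence via $\alpha$-strong convexity are more explicit than the paper's version but do not change the argument.
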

As we enforce $\alpha$-strong convexity of $h$, the minimization in (\ref{eq:argmin for grad_h_inv}) can be solved efficiently using numerical methods.
We can then use implicit differentiation (see Chapter 10, \citet{blondel2024elements}) to differentiate through this inner minimization with respect to the cost parameters. We implement this using the JAXopt libary \citep{JAXopt}.

We now have a Monge map estimator that is end-to-end differentiable with respect to its cost function parameters. 

\textbf{Choice of loss function.} By choosing an appropriate loss function $\cL(\theta)$, we can incorporate desired explicit biases into the learned cost through map estimator $T_\theta^\eps$. The training procedure is described in Algorithm \ref{alg:differentiable MBO} in Appendix \ref{sec:method}.
There are many possible training losses that can be used to encourage some desired behaviour. A natural objective, for example, is to ensure the learned mapping correctly matches known paired points. Given a known subset of paired points ${(x_i,y_i)}_{i=1}^N$,  the loss $\mathcal{L}(\theta) := \frac{1}{N} \sum_{i=1}^N \lVert T_\theta^{\eps}(x_i)-y_i \rVert_2^2$ encourages the learned map to respect the known pairs. We provide other choices of loss functions suitable for a range of problems in Appendix \ref{sec:choice of loss function}.

We can also construct the reverse map estimator $\big( T_\theta^\eps(x) \big)^{-1}$. 
We find it beneficial to train using both mappings jointly; see Appendix \ref{sec:reverse maps} for details.

\textbf{Warmstarting the inner optimizations.} Each iteration of our procedure involves solving two types of inner optimization problem, (1) the discrete entropy-regularized OT problem using Sinkhorn, and (2) the evaluation of $(\nabla h)^{-1}$. While both problems are convex and thus can be solved easily, naive initializations would result in unnecessary computational cost. We instead warmstart each optimization from the corresponding solutions at the previous iteration, which significantly improves training speed \citep{amos2023_amortized_tutorial}.

\subsection{Augmenting with Diffeomorphisms} \label{sec:diffeomorphisms}
In the presence of extensive information about the map,  the choice of strictly convex costs $c(x,y)=h(x-y)$ can be overly restrictive, as there may not be a convex cost that allows the Monge map to be consistent with the known information. We therefore extend our framework by first transforming the marginal measures using diffeomorphisms $\Phi_\mu, \Phi_\nu$, then applying our method to ${\Tilde{\mu} = \Phi_\mu \# \mu}$ and ${\Tilde{\nu} = \Phi_\nu \# \nu}$. The following extension of Theorem \ref{th:Gangbo-McCann}, proved in Appendix \ref{sec:proofs}, shows that this amounts to learning a map with cost $c(x,y) = h(\Phi_\mu(x) - \Phi_\nu(y))$.
\begin{restatable}{theorem}{GenGM}
    \label{th:generalised Gangbo-McCann}
    Under the conditions of Theorem \ref{th:Gangbo-McCann} with a cost of the form $c(x,y) = h(\Phi_\mu(x) - \Phi_\nu(y))$ for a strictly convex function $h$, the optimal plan is unique and of the form $(Id,T^\star) \# \mu$, and $T^\star$ can be written as
    \begin{equation}\label{eq:generalised Gangbo-McCann}
        T^\star(x) = \Phi_\nu^{-1} \bigg[ \Phi_\mu(x) - (\nabla h)^{-1} \circ \nabla f^\star \circ \Phi_\mu (x) \bigg].
    \end{equation}
\end{restatable}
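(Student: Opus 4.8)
The plan is to reduce everything to Theorem~\ref{th:Gangbo-McCann} by transporting the problem through the diffeomorphisms. Set $\tilde\mu := \Phi_\mu \# \mu$ and $\tilde\nu := \Phi_\nu \# \nu$. The map $\pi \mapsto \tilde\pi := (\Phi_\mu,\Phi_\nu)\#\pi$ is a bijection from $\Gamma(\mu,\nu)$ onto $\Gamma(\tilde\mu,\tilde\nu)$, with inverse $\tilde\pi \mapsto (\Phi_\mu^{-1},\Phi_\nu^{-1})\#\tilde\pi$, and by the change-of-variables formula for pushforwards it preserves the transport cost:
\begin{equation*}
\iint h\big(\Phi_\mu(x)-\Phi_\nu(y)\big)\,\rmd\pi(x,y) \;=\; \iint h(u-v)\,\rmd\tilde\pi(u,v).
\end{equation*}
Hence the Kantorovich problem with cost $c(x,y)=h(\Phi_\mu(x)-\Phi_\nu(y))$ between $\mu,\nu$ is equivalent (via a cost-preserving bijection of couplings) to the Kantorovich problem with difference cost $\tilde c(u,v)=h(u-v)$ between $\tilde\mu,\tilde\nu$.

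Next I would verify that $(\tilde\mu,\tilde\nu,\tilde c)$ meets the hypotheses of Theorem~\ref{th:Gangbo-McCann}. Since $\Omega$ is compact and $\Phi_\mu,\Phi_\nu$ are continuous, the images $\Phi_\mu(\Omega),\Phi_\nu(\Omega)$ are compact and hence contained in a common compact set $\tilde\Omega$ with Lebesgue-negligible boundary (e.g.\ a large closed ball); $\tilde c$ has the required form with $h$ strictly convex; and because $\mu$ is absolutely continuous and $\Phi_\mu$ is a diffeomorphism, the change-of-variables formula (with the smooth, nonvanishing Jacobian of $\Phi_\mu^{-1}$) shows that $\tilde\mu=\Phi_\mu\#\mu$ is absolutely continuous. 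Theorem~\ref{th:Gangbo-McCann} then gives a unique optimal plan for the transformed problem, of the form $(Id,\tilde T^\star)\#\tilde\mu$, a Kantorovich potential $f^\star$ for $(\tilde\mu,\tilde\nu,\tilde c)$, and $\tilde T^\star(u)=u-(\nabla h)^{-1}(\nabla f^\star(u))$ for $\tilde\mu$-a.e.\ $u$.

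Finally I would transfer this back along the bijection. Uniqueness of the transformed optimizer, together with the cost-preserving correspondence, forces the original optimal plan to be unique and equal to $\pi^\star = (\Phi_\mu^{-1},\Phi_\nu^{-1})\#\big[(Id,\tilde T^\star)\#\tilde\mu\big] = \big(\Phi_\mu^{-1},\,\Phi_\nu^{-1}\circ\tilde T^\star\big)\#\tilde\mu$. Writing $u=\Phi_\mu(x)$ and using $\tilde\mu=\Phi_\mu\#\mu$, this is $(Id,T^\star)\#\mu$ with $T^\star = \Phi_\nu^{-1}\circ\tilde T^\star\circ\Phi_\mu$, which is well-defined $\mu$-a.e.; substituting the expression for $\tilde T^\star$ gives exactly \eqref{eq:generalised Gangbo-McCann}.

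The bijection-of-couplings and cost-invariance steps are routine bookkeeping. The step needing the most care is the hypothesis check for the transformed problem: that $\tilde\mu,\tilde\nu$ are supported in a compact set with negligible boundary and that absolute continuity of $\mu$ is inherited by $\tilde\mu$. These rest on standard facts (continuous images of compacts are compact; diffeomorphisms map Lebesgue-null sets to Lebesgue-null sets; the Jacobian change of variables), but they genuinely use that $\Phi_\mu,\Phi_\nu$ are diffeomorphisms rather than merely invertible maps. A secondary subtlety is tracking the ``$\mu$-a.e.'' versus ``$\tilde\mu$-a.e.'' qualifiers under pushforward, and noting that if one instead wants a Kantorovich potential for the \emph{original} problem it is $x\mapsto f^\star(\Phi_\mu(x))$, whose gradient acquires an extra $D\Phi_\mu(x)^\top$ factor --- keeping $f^\star$ as the potential of the transformed problem is what yields the clean formula \eqref{eq:generalised Gangbo-McCann}.
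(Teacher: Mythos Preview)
Your proposal is correct and follows essentially the same route as the paper: push the problem forward through $(\Phi_\mu,\Phi_\nu)$ to obtain a cost-preserving bijection $\Gamma(\mu,\nu)\leftrightarrow\Gamma(\tilde\mu,\tilde\nu)$, apply Theorem~\ref{th:Gangbo-McCann} to the transformed problem with cost $h(u-v)$, and pull the unique optimal plan and map back. You are in fact more careful than the paper about verifying the hypotheses of Theorem~\ref{th:Gangbo-McCann} for $(\tilde\mu,\tilde\nu)$ (absolute continuity via the diffeomorphism, compact support with negligible boundary) and about the role of $f^\star$ as the potential of the \emph{transformed} problem, both of which the paper asserts without elaboration.
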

We parameterize $\Phi_\mu, \Phi_\nu$ using normalizing flows \citep{rezende_2015}, and train end-to-end along with the procedure in Algorithm \ref{alg:differentiable MBO}.  This increases the expressivity of the class of cost functions we consider. As we would prefer to learn simpler costs, we can take ${\Phi_\mu=\Phi_\nu}$ which preserves symmetry when $h$ is symmetric. \cref{{th:generalised Gangbo-McCann}} resembles similar recent results which use learnable invertible matrices \citep[Prop 3.]{klein2024elastic} and fixed mirror-maps \citep{rankin2023bregman}.

\begin{figure*}[ht!]
        \centering
        \includegraphics[width=0.9\linewidth, trim={0 .1cm 0 .1cm},clip]{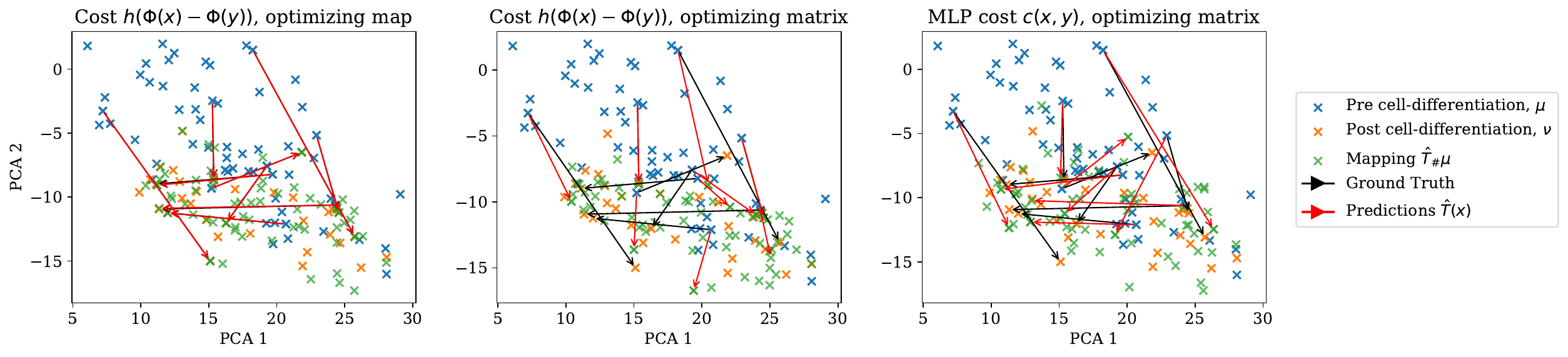}
        \vspace{-.5cm}
        \caption{\textit{(left)} By optimizing the map directly, we learn a Monge map estimator which agrees with the Live-seq trajectories. \textit{(middle)} Optimizing according to the coupling matrix with a cost 
        $h(\Phi(x)-\Phi(y))$ transports some points correctly, but struggles to learn a good cost overall (see Table \ref{tab:Live seq}). \textit{(right)} For a general MLP cost, it is difficult to obtain a good mapping estimator.}
        \label{fig:live-seq}
\end{figure*}

\section{Experiments}

\textbf{Inverse OT.} In Figure \ref{fig:inverse_OT_2d} we verify that our method can indeed learn valid cost functions in the Inverse OT setting using synthetic 2$d$ distributions.
We are able to learn a consistent map estimator for the T-shape dataset using only a symmetric convex cost $c(x,y)=h(x-y)$. For the moon and spiral datasets, we require costs of form $c(x,y) = h(\Phi_\mu(x) - \Phi_\nu(y))$. Solving the discrete OT problem on the training datasets using the learned cost recovers the correct pairs, demonstrating that the learned costs are indeed valid.

\textbf{Limited labelled pairs.} Optimal transport is commonly used to infer single-cell transcriptome trajectories between unaligned datasets, under the assumption that a cell population has moved `efficiently' between the observed timesteps \citep{Schiebinger2019, CellOT}. The recent Live-seq profiling technique \citep{Chen22_Live_seq} avoids the destruction of measured cells, enabling a number of individual cell trajectories to be traced. These trajectories appear not to agree with those predicted by OT methods that use the squared-Euclidean cost. Nevertheless, the assumption that cells move `efficiently' is reasonable, raising the question of whether a different choice of ground-cost is more suitable.

To learn an expressive but interpretable cost, we consider learning symmetric costs of the form $h(\Phi(x)-\Phi(y))$. We fit the Monge map estimator to the first 10 principal components of the Live-seq data consisting of cells pre- and post cell-differentiation, so that it agrees with the known trajectories. We plot the resulting mapping estimators in Figure \ref{fig:live-seq}. In Table \ref{tab:Live seq} we report the total incorrectly transported mass in the resulting coupling (denoted I.M.; see Appendix \ref{sec:experimental details}), as a measure of the validity of the learned cost.
To assess the adapted OT map, we also report the RMSE of the predictions for the known cell trajectories, and the Sinkhorn divergence $S_\eps(\hat{T} \# \mu, \nu)$.
Experimental details are provided in Appendix \ref{sec:experimental details}.
\begin{table}[th]
\vspace{-.5cm}
    \caption{Results from the Live-seq experiment, over 10 initialization seeds. Lower is better. Unsurprisingly, the MLP cost is able to place most mass along the correct pairings. The resulting cost is however unstructured and difficult to interpret, and it is difficult to approximate the corresponding OT map. For costs $h(\Phi(x) - \Phi(y))$, the resulting entropic mappings show an improved fit to the target, but optimizing the map learns better costs and has significantly better alignment with the ground truth.
    }
    \label{tab:Live seq}
    \begin{center}
        \begin{small}
            \begin{sc}
                \begin{tabular*}{\columnwidth}{@{}l@{\hspace{-2pt}}c@{\hspace{3pt}}c@{\hspace{3pt}}c@{}}
                    \toprule
                    & Optimizing Map & \multicolumn{2}{c}{Optimizing Matrix} \\
                    & $h(\Phi(x) - \Phi(y))$ & $h(\Phi(x) - \Phi(y))$ & MLP $c(x,y)$ \\
                    \midrule
                    I.M. & $0.095 \pm 0.035$ & $0.153 \pm 0.012$ & $0.051 \pm 0.000$ \\
                    RMSE & $2.02 \pm 2.21$ & $16.9 \pm 1.6$ & $6.91 \pm 2.52$ \\
                    $S_\eps(\hat{T} \# \mu, \nu)$ & $7.73 \pm 3.02$ & $4.94 \pm 0.85$ & $19.8 \pm 2.8$ \\
                    \bottomrule
                \end{tabular*}
            \end{sc}
        \end{small}
    \end{center}
    \vspace{-.7cm}
\end{table}
We compare against costs learned by optimizing the Sinkhorn coupling matrix to minimize the incorrect mass reported in Table \ref{tab:Live seq}. We first use the same cost parameterization $h(\Phi(x)-\Phi(y))$ and obtain a map using the entropic mapping estimator. We also use an unstructured MLP cost $c(x,y)$ and a learn a map estimator using the Monge Gap regularizer \citep{monge-gap}. Optimizing the map directly consistently obtained both an interpretable cost function and a good transport estimator agreeing with the observed trajectories.

Given the scarcity of known Live-seq trajectories, we cannot test out-of-sample performance of the resulting map estimator. We therefore perform the same procedure on synthetic data with different numbers of labelled pairs in Appendix \ref{sec:synthetic limited pairs}, reporting results on training and newly-sampled data.

\textbf{Additional Experiments.} We provide additional experiments in Appendix \ref{sec:additional experiments}, including the ability to induce desirable properties that hold on the map displacements themselves, such as being low-rank or $k$-directional.

\section{Conclusion}
We have introduced a Monge map estimator parameterized by a learnable cost function, which is differentiable with respect its parameters. This enables learning adapted cost functions and OT map estimators through gradient based training so that the map is consistent with known prior information. We have demonstrated the ability to learn costs by aligning with paired data samples, and by inducing desirable properties on the Monge map estimator itself. Future directions include: conducting further comparisons of optimizing the entropic map compared to the coupling matrix, and  investigating the use of adapted costs and OT maps in downstream tasks.

\clearpage

\section*{Acknowledgements}
Samuel Howard is supported by the EPSRC CDT in Modern Statistics and Statistical Machine Learning [grant number EP/S023151/1]. Patrick Rebeschini was funded by UK Research and Innovation (UKRI) under the UK government’s Horizon Europe funding guarantee [grant number EP/Y028333/1].



\bibliographystyle{plainnat}

\bibliography{main.bib}


\clearpage

\newpage
\appendix
\onecolumn

\section{Method} \label{sec:method}

\begin{algorithm}[h]
   \caption{Differentiable cost-parameterized entropic mapping estimator}
   \label{alg:differentiable MBO}
\begin{algorithmic}
    \STATE {\bfseries Input:} Empirical measures $\hat{\mu} = \sum_i \delta_{x_i}, \hat{\nu} = \sum_j \delta_{y_j},$
   \STATE Parameterized ICNN cost $h_\theta$ ;
   \STATE Entropy regularization scaling value $\Tilde{\eps}$;
   \STATE Loss function $\cL(\theta)$;
   \STATE Regularizing function $\cR(\theta)$;
   \WHILE{not converged}
   \STATE
   \begin{enumerate}
        \item 
        Apply Sinkhorn's algorithm between the empirical measures $\hat{\mu}, \hat{\nu}$ with entropy regularization value $\eps$ (the cost matrix mean scaled by $\Tilde{\eps}$), using the current parameterized function $h_\theta$, \;
        \item Use Sinkhorn output $\bg_\varepsilon^\star$ to construct the entropic potential estimator
        \begin{equation}
            \hat{f}_\theta^\varepsilon(x) = - \varepsilon \log \sum_j \exp{\bigg(\frac{(\bg_\varepsilon^\star)_j - h_\theta(x-y_j)}{\varepsilon} \bigg)},
        \end{equation}
        \item Construct the entropic mapping estimator
        \begin{equation}
            T_\theta^\eps(x) = x - (\nabla h_\theta)^{-1} \circ (\nabla \hat{f}_\theta^\varepsilon)(x),
        \end{equation}
        \item  Update $\theta$ by gradient descent according to a loss function $\cL(\theta) + \cR(\theta)$ by differentiating through Sinkhorn and $(\nabla h)^{-1}$ using implicit differentiation.
    \end{enumerate}
   \ENDWHILE
\end{algorithmic}
\end{algorithm}

We implement the differentiation through Sinkhorn using the OTT-JAX library \citep{ott-JAX}. We use the L-BFGS solver \citep{xiao_lbfgs} from the JAXopt library \citep{JAXopt} to evaluate and implicitly differentiate through the inner minimization when evaluating $(\nabla h_\theta)^{-1}$.

\subsection{Cost function parameterization} \label{sec:cost function parameterization}

\subsubsection{Convex function $h$}

\textbf{Input Convex Neural Networks.}
ICNNs \citep{amos_ICNN} are a class of neural networks $f_\theta : \sR^d \rightarrow \sR$ with an architecture that ensures that the mapping $x \mapsto f_\theta(x)$ is convex. An ICNN consists of $k$ feedforward layers, where for each layer $i = 0,...,k-1$ the activations are given by
\begin{equation}
    z_{i+1} = \sigma_i(W_i^{(z)}z_i + W_i^{(x)}x + b_i), \qquad f_\theta(x) = z_k.
\end{equation}
The network ensures convexity by consisting only of non-negative sums and compositions of convex non-decreasing functions with convex functions. It therefore requires the nonlinear activation functions to be convex and non-decreasing, and the weight matrices $W_i^{(z)}$ to be non-negative (no such requirements are required for the passthrough matrices $W_i^{(x)}$). The first layer does not require an additional passthrough layer, so we have $W_0^{(z)} = 0$.

ICNNs have previously been utilised in computational optimal transport methods to parameterize Kantorovich potentials, which via a reparameterization are known to be convex in the case of the squared-Euclidean cost \citep{Makkuva2020, korotinW2GN}. Although mathematically elegant, \citet{korotin2021benchmark} note that constraining the potentials to be ICNNs can in fact result in worse performance versus vanilla MLPs. We remark that in our case, the use of ICNNs instead of MLPs to parameterize the cost is crucial to our methodology, as it ensures that the inner optimization in \eqref{eq:argmin for grad_h_inv} is convex and can be solved numerically.

\textbf{Symmetry.}
The additional information used to learn an adapted Monge map and cost will not uniquely determine the cost function. This is especially the case when there is little additional information. We therefore wish to favour learning simpler and more interpretable costs. Translation-invariant convex costs $c(x,y)=h(x-y)$ provide a good inductive bias towards such costs. We may also wish for the cost to be symmetric; this can be optionally enforced by parameterizing as $h(z) = \Tilde{h}_\theta(z) + \Tilde{h}_\theta(-z)$ for an ICNN $\Tilde{h}_\theta$.

\textbf{$\alpha$-strong convexity.}
The closed-form of the Monge map in \eqref{eq:Gangbo-McCann} requires strict convexity of $h$. We enforce this by adding a quadratic term $\alpha \lVert x - y \rVert_2^2$ to the cost function. This also ensures that the inner minimization \eqref{eq:argmin for grad_h_inv} is strongly convex and thus the unique solution can be solved for efficiently with numerical methods.

\textbf{Parameterizing the convex conjugate.}
We remark that after having learned an adapted transport map, evaluating the learned mapping at a point $x$ requires a minimization problem to evaluate $(\nabla h)^{-1}$. This can be avoided by instead parameterizing the convex conjugate $h^*$ rather than $h$, and using the relation $(\nabla h)^{-1} = \nabla h^*$. While avoiding the inner minimization when evaluating the map, this instead requires an inner minimization to evaluate each entry of the cost matrix $\bC_{i,j} = c(x_i, y_j)$, which is then fed into Sinkhorn at each training iteration. The result is that significantly more inner minimizations are required during training, though with the benefit that the resulting learned map can be evaluated directly. The choice to parameterize $h$ or $h^*$ should therefore depend on whether fast evaluation of the learned mapping is preferred over training speed.

\subsubsection{Diffeomorphisms $\Phi_\mu, \Phi_\nu$}
In Section \ref{sec:diffeomorphisms}, we extend our framework to incorporate cost functions of the form $c(x,y) = h(\Phi_\mu(x) - \Phi_\nu(y))$. In our experiments, we use the MADE flow architecture \citep{MADE_flow} to parameterize $\Phi_\mu, \Phi_\nu$, implemented using the Jax-Flows library \citep{JAX-Flows}.

\textbf{Relation to metric learning.}
When taking $\Phi_\mu  = \Phi_\nu$ the cost $c(x,y) = h(\Phi(x) - \Phi(y))$ resembles a Siamese network \citep{bromley_siamese}, a common approach in the metric learning literature. Siamese networks seek to learn encoders $\Phi$ so that the resulting function $d(x,y) = \half \lVert \Phi(x) - \Phi(y)\rVert_2^2$ is an improved notion of distance for the data. However, \citet{Indyk_2017} note that such compositions can be insufficiently expressive to precisely model certain metrics. Motivated by this, \citet{Pitis2020_InductiveBias} replace the squared-Euclidean distance with a modified ICNN, providing a more expressive class of neural distances.

Our motivation is similar. Note that we could use an ICNN $\psi$ to directly parameterize a Monge map for a cost $c(x,y) = \half \lVert \Phi_\mu(x) - \Phi_\nu(y) \rVert_2^2$ according to the composition $\Phi_\nu^{-1} \circ \nabla \psi \circ \Phi_\mu$ (via Brenier's theorem and a change of variable as in Theorem \ref{th:generalised Gangbo-McCann}). However, by incorporating an ICNN we add flexibility in the cost parameterization, allowing us to favour simpler and more interpretable learned costs.

\subsection{Choice of Loss Function} \label{sec:choice of loss function}
To illustrate the generality of our method, we provide here potential choices of loss function suitable for different applications. They encourage the map estimator to be consistent with known information or desired behaviour. We remark that these suggestions are only examples of possible loss functions that can be used, and a practitioner is free to choose any differentiable loss function suitable for their needs. In particular, our method is most suitable when behaviour is desired to hold on the map estimator itself, as this is the object being optimized.

\subsubsection{Labelled datapoints}\label{sec:labelled datapoints}

\textbf{Paired datapoints.}
Consider assuming access to a subset $\{(x_i,y_i)\}_{i=1}^N$ of $\hat{\mu}, \hat{\nu}$ consisting of known pairings from the optimal mapping, so $y_i = T^\star(x_i)$. We can optimize our map to agree with the observed pairings by penalizing the $L_2$ distance between the predictions and the known targets,
\[\mathcal{L}(\theta) := \frac{1}{N} \sum_{i=1}^N \lVert T_\theta^{\eps}(x_i)-y_i \rVert_2^2.\]
In the case where all samples in $\hat{\mu}, \hat{\nu}$ are paired, this recovers the Inverse OT setting. Note that during training, we only need to evaluate $T_\theta^{\eps}$ at the paired data points.

\textbf{Subset-to-subset correspondence.}
\citet{Liu2020_OT_SI} consider a setting in which certain source subsets are known to map to certain target subsets. If our empirical source and target distributions are ${\mu = \bigcup_i \mu_i}, {\nu = \bigcup_i \nu_i}$ with $\mu_i$ known to map to $\nu_i$, then the loss function can be chosen as
\[\mathcal{L}(\theta) = \sum_i S_{\Tilde{\eps}}(T_{\theta}^{\eps} \# \hat{\mu_i}, \hat{\nu_i}),\]
where $S_{\Tilde{\eps}}$ denotes the Sinkhorn divergence \citep{genevay2018learning} with regularization parameter $\Tilde{\eps}$.

\subsubsection{Properties of Map Displacements} \label{sec:properties of map displacements}
\textbf{Low-rank displacements.} To encourage the map estimator $T_\theta^\eps$ to transport along a lower $p$-dimensional subspace, we can penalize the magnitude of the trailing singular values $\sigma_i^{(\theta)}$ of the displacement matrix $(T_\theta^\eps(x_i) - x_i)_i$, \[\cL(\theta) = \sum_{i>p} \lVert \sigma_i^{(\theta)} \rVert^2.\]

\textbf{$k$-directional displacements.}
We can encourage the displacements to lie primarily along at most $k$ distinct directions. We can parameterize $k$ directions $v^\phi = (v_1^\phi, ..., v_k^\phi)$, then maximize the cumulative smoothed minimum of the cosine distance $d_{\cos}(u,v) = 1 - \frac{u \cdot v}{\lVert u \rVert \lVert v \rVert}$, \[\cL(\theta, \phi) = - \sum_i \LogSumExp_j \Big[ - d_{\cos}\big( T_\theta^\eps(x_i) - x_i,v_j^\phi \big) \Big]. \]

\subsubsection{Reverse map estimators} \label{sec:reverse maps}
We have presented our method as optimizing according to a loss on the forward mapping estimator $\cL(\theta) = \cL(T_\theta^\eps)$. However, we can also construct the reverse entropic mapping estimator as
\begin{equation*}
    \hat{g}_\theta^\eps(y) = - \eps \log \sum_i \exp{\bigg(\frac{(\bof_\eps^\star)_i - h_\theta(x_i-y)}{\eps} \bigg)},
    \qquad
     (T_\theta^\eps)^{-1}(y) = y - (\nabla \Tilde{h}_\theta)^{-1} \circ (\nabla \hat{g}_\theta^\eps)(y),
\end{equation*}
where $\Tilde{h}_\theta(z) = h_\theta(-z)$. The reverse map estimator should also be consistent with the known information, so we optimize according to the sum of the losses of the forward and backwards mapping estimators, \[\cL(\theta) = \frac{1}{2} \cL(T_\theta^\eps) + \frac{1}{2}  \cL \big( (T_\theta^\eps)^{-1} \big).\]

\subsection{Regularization}

To ensure stable training, we add a regularization term $\cR(\theta)$ to the loss used during training.

\textbf{Cost matrix.}
During optimization, the procedure may learn a cost function which places very large absolute values on certain displacements in order to encourage the desired behaviour. Although the OT map is invariant to scaling of the cost function, if such values become extreme this can result in numerical instability. To prevent this, we add a regularization term penalizing the softmax of the largest absolute value in the cost matrix $\bC_\theta$, \[\lambda_{max} \LogSumExp_{i,j}\big( \vert c_\theta(x_i, y_j) \vert  \big)^2.\] The value of $\lambda_{max}$ can be taken to be very small, so that it has minimal effect on optimization apart from preventing extreme values.

\textbf{Flows.}
Recall that often we prefer to learn symmetric cost functions. However, when there is a large amount of available information about the transport map, it might not be possible to fit such information with a symmetric cost. In such cases, we can use two separate flows, and we can encourage the cost towards symmetricity by adding a regularizing function so that ${\Phi_\mu \approx \Phi_\nu}$, \[\sum_i \lVert \Phi_\mu(x_i) - \Phi_\nu(x_i) \rVert^2 + \sum_j \lVert \Phi_\mu(y_j) - \Phi_\nu(y_j) \rVert^2.\]

As we wish to prefer simpler cost functions, we can control the complexity of the flow by regularizing using the Dirichlet energy, \[ \half \sum_i \lVert \nabla \Phi_\mu(x_i)\rVert_2^2 + \half \sum_j \lVert \nabla \Phi_\nu(y_j)\rVert_2^2.\]

\section{Related Work} \label{sec:related work}

\subsection{Monge Map Estimation} \label{sec:Monge map estimation}
Often in applications it is desired to construct an estimator for the Monge map itself. This motivates our approach of optimizing a Monge map estimator directly, to ensure that the resulting mapping agrees with known information. We here provide an overview of methods to construct a Monge map estimator.

\subsubsection{Neural OT map estimation} \label{sec:neural OT}
Many approaches utilise neural networks to parameterize the Kantorovich potentials, or alternatively the transport map itself.

\textbf{Squared-Euclidean cost.}
In the case of the squared-Euclidean cost $c(x,y) = \quadcost$, the closed form expression for the Monge map given in Theorem \ref{th:Gangbo-McCann} reduces to the celebrated Brenier's theorem, $T^\star(x) = x - \nabla f^\star(x)$ \citep{Brenier1987}. In this case, it is common to reparameterize the potentials by letting $\psi = \frac{1}{2} \lVert x \rVert^2 - f(x)$ and $\varphi(y) = \frac{1}{2} \lVert y \rVert^2 - g(y)$. The Kantorovich dual problem (\ref{eq:Kantorovich dual v1}) becomes
\begin{align}\label{eq:Kantorovich dual v2}
    \inf_{(\psi,\varphi) \in \Tilde{\Phi}} \bigg\{ \int \psi d\mu + \int \varphi d\nu \bigg\},
\end{align}
where $\Tilde{\Phi} = \{(\psi,\varphi) \in  L^1(\mu)\times L^1(\nu) : \psi(x)+\varphi(y) \geq \langle x,y \rangle ~ \mu \otimes \nu \text{~a.e.}\}.$ There exists an optimal potential $\psi^\star$ that is convex \citep{santambrogio2015optimal}, and the Monge map is given by $T^\star(x) = \nabla \psi^\star(x)$. 

The dual objective \eqref{eq:Kantorovich dual v2} and the link to convex analysis form the basis for many successful computational techniques, though as a result they are restricted to only the squared-Euclidean cost. \citet{taghvaei2019, Makkuva2020, korotinW2GN} parameterize the dual potentials in (\ref{eq:Kantorovich dual v2}) using ICNNs. \citet{taghvaei2019} solve for the convex conjugate $\psi^*$ as an inner minimization step, and subsequent work \citep{Makkuva2020} instead parameterize both potentials as ICNNs and optimize a minimax objective. \citet{korotinW2GN} remove the minimax objective by adding a cycle-consistency regularizer to encourage the potentials to be convex-conjugate up to a constant. While the use of ICNNs to parameterize the potentials is appealing from a mathematical standpoint, results in \citet{korotin2021benchmark} suggest that it may hinder optimization.

Although the squared-Euclidean cost has desirable theoretical properties, it may not be an appropriate choice in applications. \citet{somnath2023aligned} comment that methods approximating the squared-Euclidean OT map can result in erroneous matchings when ground truth couplings are known (demonstrated in Figure \ref{fig:inverse_OT_2d}), and \citet{genevay2018learning} have shown it to have poor discriminative properties for image data. This motivates learning an improved cost function more suitable for the problem at hand.

\textbf{General Cost Functions.}
Recently, several alternative methods have been proposed that are suitable for general cost functions. \citet{fan2023neural} and \citet{asadulaev2023neural} optimize a saddle point formulation of the OT problem, directly parameterizing the transport map and a dual potential using neural networks. \citet{monge-gap} take an alternative approach, instead parameterizing the mapping with a neural network and training so that it fits to the source measure, with a \textit{Monge Gap} regularizer that encourages the map to be optimal with respect to the chosen cost. Such methods present a trade-off between fitting to the distribution, and optimality with respect to the chosen cost, both of which need to be minimized if the resulting map is to be an accurate approximation to the OT map.

\subsubsection{Entropic map estimation}

\textbf{Entropic OT.} The entropy regularized OT problem adds a entropic penalty term to the primal objective,
\begin{equation}
    \pi_\eps^\star = \arginf_{\pi \in \Gamma(\mu,\nu)} \iint_{\sR^d \times \sR^d} c(x,y) \rmd \pi(x,y) + \eps KL(\pi | \mu \otimes \nu).
\end{equation}
Note that only the support of the reference measure $\mu \otimes \nu$ affects the optimization problem \citep{CompOT_Peyre_Cuturi}. The entropy regularization `blurs' the solution, enabling differentiability of both the entropy-regularized OT cost and plan with respect to the input measures and cost function. As a result, optimal transport has enjoyed success in machine learning applications for which differentiability is required.

The dual formulation of the unregularized Kantorovich problem \eqref{eq:Kantorovich dual v1} has strict constraints on the potentials, leading to a difficult optimization problem. The entropic-regularized dual formulation relaxes these constraints, making it more amenable to optimization,
\begin{align}\label{eq:entropic dual}
    \sup_{\substack{f \in \cC(\cX) \\ g \in \cC(\cY)}} &\bigg\{ \int f \rmd \mu + \int g \rmd \nu \notag
    - \eps \iint e^\frac{f(x)+g(y) - c(x,y)}{\eps} \rmd \mu(x) \rmd \nu(y) \bigg\}.
\end{align}

\textbf{Stochastic dual optimization.} 
\citet{Genevay2016StochasticOF} optimize this dual objective with stochastic gradient descent using samples from the measures. In the case of continuous measures, they parameterize the potentials as kernel expansions in a Reproducing Kernel Hilbert Space. \citet{seguy_largescaleOT} instead propose using neural networks to parameterize the potentials, and also construct a deterministic map using a neural network to approximate the resulting barycentric projection. In the case of the squared-Euclidean cost, this barycentric projection corresponds to the Monge map.

\textbf{Sinkhorn.}
In practice, we often have access to discrete empirical approximations to underlying continuous distributions. In the discrete case with measures $\Tilde{\mu} = \sum_i a_i \delta_{x_i}, \Tilde{\nu} = \sum_j b_j \delta_{y_j}$ and cost matrix $\bC_{i,j} = c(x_i, y_j)$, the entropy-regularized OT problem can be written as
\begin{equation}
    \Pi_\eps^\star = \min_{\Pi \in U(\ba,\bb)} \la \Pi, \bC \ra - \eps H(\Pi), \qquad H(\Pi) = - \sum_{i,j} \Pi_{i,j} (\log\Pi_{i,j} - 1).
\end{equation}
The Sinkhorn algorithm \citep{Cuturi2013} provides an efficient way to compute the solution to the discrete entropy-regularized OT problem, and is also well-suited for parallelization on GPUs to solve multiple OT problems simultaneously. From an arbitary initialization $\bg^{(0)}$ and using the kernel matrix $\bK_{i,j} = \exp \big( \frac{-\bC_{i,j}}{\eps} \big)$, the Sinkhorn iterates are defined as
\begin{equation}
    \bof^{(\ell+1)} = \eps \log \ba - \eps \log \big( \bK e^{\bg^{(\ell)} / \eps} \big),
    \qquad
    \bg^{(\ell+1)} = \eps \log \bb - \eps \log \big( \bK^\top e^{\bof^{(\ell+1)} / \eps} \big).
\end{equation}

\textbf{Entropic mapping estimator.}
Recall that for a cost function $c(x,y) = h(x-y)$ for strictly convex $h$, the Monge map has form
\[T^\star(x) = x - (\nabla h)^{-1}(\nabla f^\star(x)),\]
where $f^\star$ is the Kantorovich potential. The Kantorovich potentials can be chosen to satisfy the following $h$-conjugacy property \citep{santambrogio2015optimal}, which is unfortunately a difficult property to enforce,
\begin{equation*}
    f^\star(x) = \min_y \big\{ h(x-y) - g^\star(y) \big\} \qquad
    g^\star(y) = \min_x \big\{ h(x-y) - f^\star(x) \big\}.
\end{equation*}
In the entropically regularized case, the entropic potentials $f_\eps^\star, g_\eps^\star$ instead satisfy the following relation, which is a softmin relaxation of the above $h$-conjugacy property,
\begin{equation*}
    f_\eps^\star(x) = - \eps \log \int e^\frac{g_\eps^\star(y) - h(x-y)}{\eps} d\nu(y), \qquad
    g_\eps^\star(y) = - \eps \log \int e^\frac{f_\eps^\star(x) - h(x-y)}{\eps} d\mu(x).
\end{equation*}

This suggests using the entropic potentials in place of the true Kantorovich potential. In practice, we only have access to discrete empirical measures $\hat{\mu}, \hat{\nu}$ when constructing a Monge map estimator. The entropic mapping estimator therefore solves the discrete entropic OT problem between $\hat{\mu}, \hat{\nu}$ using Sinkhorn, and uses the output to construct the entropic potential $\hat{f}_\eps$ which is used in place of $f^*$ in the Monge map expression. The resulting entropic map estimator is
\[T_h^\eps(x) = x - (\nabla h)^{-1} (\nabla \hat{f}_\eps (x) ).\]
This estimator was proposed in \citet{pooladian2022entropic} for the squared-Euclidean cost, along with finite-sample guarantees on its performance. It was extended to general convex functions $h$ in \citet{monge-bregman-occam2023}, and \citet{klein2024elastic} demonstrate the versatility and good performance of the resulting estimator.

As it is constructed from the Sinkhorn iterations, the entropic mapping estimator can be computed efficiently in comparison to the alternative approaches outlined above. As we can differentiate through the Sinkhorn iterations, it is therefore suitable for our aim of constructing a differentiable Monge map estimator. Our additional contributions lie in the use of ICNNs to parameterize $h$, and in the observation that the evaluation of $(\nabla h)^{-1}$ can be differentiated through using implicit differentiation, enabling end-to-end differentiability of the estimator with respect to the cost function parameters.

\subsection{Cost Learning} \label{sec:cost learning}
Methods to learn an adapted cost functions have attracted attention since the problem was introduced by \cite{CuturiAvis2014}, but remain relatively unexplored in comparison to standard OT. Existing methods in the literature consider a variety of different problem settings.

\subsubsection{Inverse Optimal Transport}
Inverse optimal transport (iOT) aims to learn a cost function given paired samples $(x_i, y_i)$ from the optimal or entropic coupling.

\textbf{Optimizing a dual objective.}
Existing iOT approaches typically assume access to samples from the entropic coupling $(x_i,y_i) \sim \pi_\eps$, which form an empirical joint measure $\hat{\pi}_n = \frac{1}{n} \sum_i \delta_{(x_i,y_i)}$. They then perform maximum likelihood estimation for a parameterized cost $c_\theta$ by minimizing the negative log-likelihood given by the convex function \[\ell(\theta) = - \la c_\theta, \hat{\pi}_n \ra + \sup_{\pi \in \cU(a,b)} \big\{ \la c_\theta, \pi \ra - \eps H(\pi) \big\} .\]
The cost is usually parameterized as a linear combination of convex basis functions, and a convex regularizer $R(\theta)$ is added to encourage the parameter to be sparse \citep{Carlier2020SISTA, sparsistency2023} or low-rank \citep{Dupuy2019}. To avoid a bilevel optimization procedure, the problem is reformulated as minimizing the dual objective 
\begin{align} \label{eq:iOT dual objective}
    \mathcal{J}&(\theta,f,g) = \int c_\theta(x,y)-f(x)-g(y) d\hat{\pi}_n(x,y) + \eps \int \exp \big(\frac{f(x)+g(y)-c(x,y)}{\eps}\big) d\mu(x) d\nu(y) + \lambda R(\theta).
\end{align}
\citet{Ma2021} optimize a similar dual objective, and parameterize both the cost and the Kantorovich potentials using MLPs.
The numerical algorithm proposed in \cite{sparsistency2023} uses the known form of the optimal potential $g$ to instead optimize a semi-dual formulation of (\ref{eq:iOT dual objective}), which results in a better-conditioned optimization problem. 

Discrete inverse OT has been considered from the perspective of matching problems \citep{Li2019}, contrastive learning \citep{shi_contrastive_learning}, and economics \citep{Dupuy_2014, GalichonSalanie2021}, as well as from a Bayesian perspective in \citet{Stuart2019}. Rather than learning a specific cost matrix, \citet{Chiu2022} provide a theoretical analysis of the set of possible cost matrices.

The inverse OT setting assumes access to paired samples from an transport plan, which is an unlikely scenario in practice. It is more likely that we have partial information about data associations, such as a smaller subset of paired points. Alternatively, we may leverage known information about the transport map specific to the problem at hand as an inductive bias. The above inverse OT methods are unable to utilise such partial information.

\subsubsection{Differentiating through Sinkhorn} \label{sec:sinkhorn differentiation}
An alternative approach for learning cost functions is to optimize according a loss that is a function of the entropy-regularized coupling matrix, which can be obtained using Sinkhorn's algorithm. The cost parameter is then updated through the bilevel optimization problem by unrolling the Sinkhorn iterations, or using implicit differentiation. Differentiating through Sinkhorn provides a more flexible cost-learning approach in comparison to inverse OT, as a specific loss function can be chosen to encourage the desired behaviour in the coupling matrix.

\citet{Liu2020_OT_SI} use this procedure to learn a cost function from known subset correspondences. They use Sinkhorn's algorithm to solve for the entropic coupling matrix according to the current parameterized cost function. The loss is then the sum of the squares of the entries in the resulting matrix corresponding to incorrect mappings between the known subset assignments. In the extreme case, this recovers inverse OT and the loss function is the sum of the squares of the off-diagonal elements.

\citet{klein2024elastic} differentiate through Sinkhorn to instead learn cost functions based on structural assumptions on the displacements. They minimize a loss $\cL(\theta) = \la \pi_\theta^\eps, M(\theta) \ra$, where the matrix entries $M(\theta)_{ij} = \tau_\theta(x_i - y_j)$ are a parameterized convex regularization function evaluated on the displacements. As such, they aim to learn a cost so that the resulting coupling places low mass on values with large regularization value. In particular, they consider regularizing functions ${\tau_A^\bot(z) = \half \lVert  A^\bot z \rVert_2^2}$ promoting displacements in the span of the orthonormal matrix $A$, which they aim to learn.

As discussed in Appendix \ref{sec:Monge map estimation}, it is often the case that we wish to obtain an estimator for the Monge map itself. Learned costs can be plugged in to Monge map estimators, though it is only recently that such solvers have been developed that can handle general ground-cost functions (see Appendix \ref{sec:neural OT}). Such estimators can be difficult to interpret as they use a neural network rather than a closed-form mapping, and if they fail to learn accurate approximations to the OT map then the resulting mapping may not display the desired properties.
Moreover, it could be desired for properties to hold in the displacements of the mapping estimator itself, rather than for the displacements in the coupling matrix which are fixed according to the empirical distributions.
In contrast to the aforementioned approaches, we optimize a \textit{map estimator itself}. This avoids the need for a two-step procedure and instead optimizes the object of interest directly, ensuring that the resulting mapping displays the desired behaviour.

\subsubsection{Alternative approaches to cost learning} 

Learning improved cost functions has been considered from the supervised metric-learning perspective, aiming to learn a ground cost between histograms that agrees with labelled `similarity' coefficients between pairings \citep{CuturiAvis2014, Wang2012_supervisedEMD}. \citet{Heitz_2020} and \citet{pooladian2023lagrangian} learn a cost from observations of a density that is assumed to be evolving optimally. \cite{Heitz_2020} consider discrete measures supported on graphs, in which the cost is given by a geodesic on the graph parameterized by weights on each edge, whereas \citet{pooladian2023lagrangian} learn a Riemannian metric on the underlying continuous space. \citet{genevay2018learning, Paty2020} consider an adversarial setting in which the cost is chosen to maximize its discriminative power.

\section{Experimental details} \label{sec:experimental details}

\subsection{Inverse OT}
We train according to the Inverse OT loss function given in Appendix \ref{sec:labelled datapoints} using 128 pairs sampled from 2-dimensional T-shape, moon, and spiral distributions.
We plot the ground-truth pairings and learned predictions for 32 newly-sampled test points in Figure \ref{fig:inverse_OT_2d}, along with the squared-Euclidean entropic map estimator as a comparison. In all experiments, we parameterize $h$ to be symmetric and $0.01$-strongly convex, and use an ICNN with hidden layers of size $[32,32]$. We train for 500 iterations. For the moon and spiral datasets we use a relative epsilon value of 0.01; for the T-shape dataset we decay the relative epsilon from 0.05 to 0.002. We are able to learn a mapping estimator resembling the T-shape data using a simple symmetric cost of the form $c(x,y)=h(x-y)$. For the moon and spiral datasets, we require costs of the form $c(x,y) = h(\Phi_\mu(x) - \Phi_\nu(y))$. 

\subsection{Aligning OT maps to Live-seq trajectories}
\textbf{OT for cell profiling.} Tracing individual cell transcriptome trajectories is an important problem in biological applications, and aims to identify and predict responses of cells to biological processes or external treatments. Most profiling techniques result in the destruction of the cell. The resulting data therefore consist only of individual `snapshots' of the overall cell population without alignment. In order to infer how individual cells have developed between these snapshots, optimal transport methods can be used as an inductive bias to align the observed distributions, as it is assumed that the cell population has moved `efficiently' in the time between observations \citep{Schiebinger2019, CellOT}. Such methods typically use the squared-Euclidean cost, but it is unclear whether this is a suitable notion of distance for gene-expression data.

\textbf{Live-seq.} Recently, \citet{Chen22_Live_seq} propose the Live-seq profiling technique, which conducts genetic profiling using only a small extract of cytoplasm from the cell. This does not require the destruction of the cell and has minimal effect on its development, therefore enabling individual cell trajectories to be observed. As only cells which provide viable samples at both timesteps can be traced, the number of individual cell trajectories that are observed is far smaller than the number of observations in each cell population. \citet{Chen22_Live_seq} observe the trajectories of 12 individual cells before and after treatment, along with many more unpaired samples. It is therefore desirable to use this observed data to learn an improved cost function and corresponding OT map, which can then be used in downstream tasks such as trajectory prediction or lineage tracing. Inverse OT methods could be used to learn from only the paired samples, but they are unable to make use of the unpaired majority in the cell populations.

We restrict our attention to the adipose stem and progenitor cell (ASPC) populations before and after cell-differentiation, as these contain the majority of observed cell trajectories. Our source and target distributions therefore consist of 121 and 72 cell measurements respectively, including 9 trajectory pairings tracing the same cell. 

\textbf{Experiment details.}
We train a Monge map estimator for a symmetric cost of form $h(\Phi(x) - \Phi(y))$ to agree with these observed trajectories, using the first 10 principal components. We use a relative epsilon value of 0.01, enforce 0.01-strong convexity of $h$, and use an ICNN with hidden dimensions $[64,64,64]$. We train both the ICNNs and flows using the Adam optimizer with learning rates 3e-3 and 1e-3 respectively, and train for 1000 iterations.

For comparison, we consider optimizing the coupling matrix rather than the mapping. To ensure a fair comparison, we first consider using the same cost parameterization $h(\Phi(x) - \Phi(y))$, and we use the entropic mapping estimator to obtain a Monge map estimator. We also compare to learning a general unstructured cost $c(x,y)$ parameterized by an MLP. To obtain a map estimator for this cost, we use the Monge gap regularizer \citep{monge-gap}. When optimizing the coupling matrix, we maximize the total `correct mass' according to the known trajectories. That is, we use the loss function
\begin{equation*}
\cL(\theta) = - \langle M, \pi_\theta^\eps \rangle, \qquad 
M_{ij} = 
\begin{cases} 
1 & \text{if $x_i$ is known to map to $y_j$} \\ 
0 & \text{otherwise.} 
\end{cases}
\end{equation*}
In Figure \ref{fig:live-seq}, we compare the predicted trajectories of the resulting mapping estimators to see whether they are indeed consistent with the known ground-truth trajectories. In Table \ref{tab:Live seq} we report the RMSE of the predictions of the known points, and also the Sinkhorn divergence between the predicted points and the target distribution, averaged over 5 initializations. We also report the total incorrectly transported mass in the resulting entropy-regularized coupling matrices, defined as
\begin{equation*}
\langle M, \pi_\theta^\eps \rangle, \qquad 
M_{ij} = 
\begin{cases} 
1 & \text{if $x_i$ is known to map to $y_{j'}$ for $j \neq j'$, or $x_{i'}$ is known to map to $y_{j}$ for $i \neq i'$ }\\ 
0 & \text{otherwise.} 
\end{cases}
\end{equation*}
For a good learned cost, this value should be low as the entropic transport plan will transport a large amount of the mass for a point $x_i$ to its known endpoint $y_j$. Note that as the coupling matrix has the correct marginals, minimizing this quantity is equivalent to the objective used when optimizing according to the matrix.

\textbf{Results.} By optimizing the map directly, we are able to learn a mapping that aligns with the known pairings (Figure \ref{fig:live-seq}), which should be expected given that this was the objective used during training. The incorrectly transported mass in Table \ref{tab:Live seq} is also low, indicating that we have also learned a good cost function which places mass along the known trajectory pairings. In contrast, the maps obtained by optimizing the matrix do not align well with the ground-truth. For costs $h(\Phi(x) - \Phi(y))$ optimizing the matrix often struggles to learn a good cost, as evidenced by large amounts of incorrectly transport mass in Table \ref{tab:Live seq}. Some of the points for the cost $h(\Phi(x) - \Phi(y))$ are approximately correct, indicating that the learned cost has placed the mass correctly along some rows of the coupling matrix but has converged to a local minimum. The general MLP $c(x,y)$ learns a cost that places mass along the correct trajectories, which is unsurprising given the flexibility of such a cost parameterization. However, as the cost is unstructured it is difficult to interpret. Moreover, the unstructured cost means that it is difficult to obtain an estimator for the OT map, and consequently the map obtained using the Monge Gap regularizer does not align well with the Live-seq trajectories from which the cost was learned.

\section{Additional Experiments} \label{sec:additional experiments}

\subsection{Synthetic Limited Labelled Pairs} \label{sec:synthetic limited pairs}
\begin{figure*}[t]
        \centering
        \includegraphics[width=0.8\linewidth]{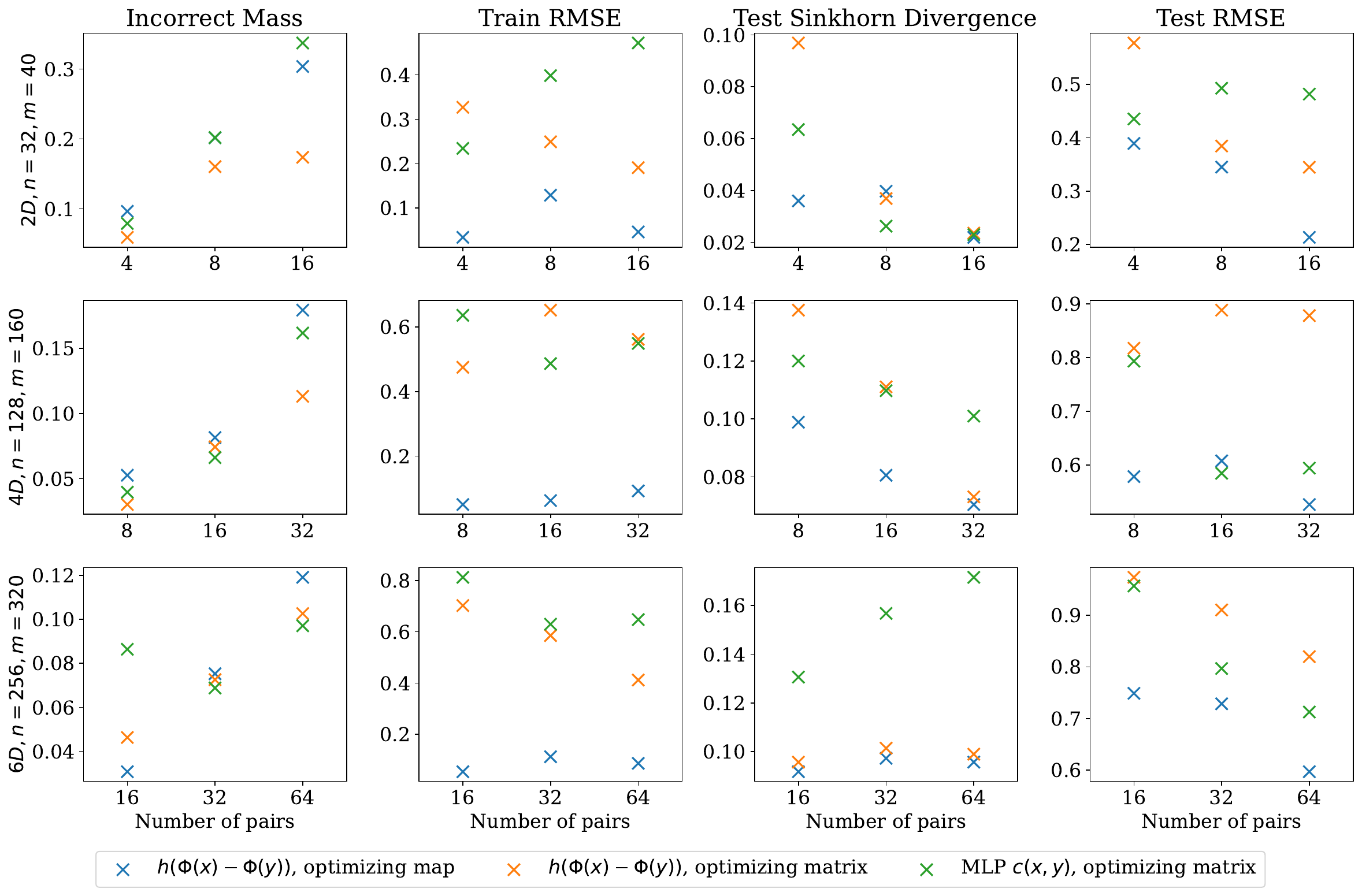}
        \caption{Results for the synthetic limited labelled pairs experiment. Optimizing the map estimator results in a mapping that aligns significantly better with the known paired points, and also demonstrates improved prediction for out-of-sample points. The learned cost appears comparable to those learned by optimizing the coupling.}
        \label{fig:synthetic limited pairs}
\end{figure*}

We validate the findings from the Live-seq experiment by performing the same procedure on similar synthetic datasets, which enables us to evaluate performance on newly sampled points. We also investigate the effect of increasing the number of paired points on the out-of-sample performance of the learned mapping.

\textbf{Data generation.} We generate the source and target datasets by pushing samples from a $\cU = \textrm{Unif}([0,1]^d)$ distribution through two respective randomly generated diffeomorphisms $\Phi_1, \Phi_2$, so we have $\mu = \Phi_{1} \# \cU, \nu = \Phi_{2} \# \cU$. This corresponds to a continuous transformation from the source to the target via the composition of the diffeomorphisms, $\nu = (\Phi_2 \circ \Phi_1^{-1}) \# \mu$. Note that while it is likely not the case that such mappings constitute an optimal transport map, such transformations provide data distributions that resemble the Live-seq data. In the experiments, the datasets consist of an unpaired majority (i.e. most are generated by independently sampling from $\cU$ and pushing through the corresponding mapping). A subset of the datasets are paired, meaning that $y = \Phi_2 \circ \Phi_1^{-1}(x)$. Such pairs simulate the known trajectories in the Live-seq data.

\textbf{Experimental details.} We perform the procedure in 2, 4 and 6 dimensions, with 32, 128 and 256 source samples and 40, 160 and 320 target samples respectively. A subset of these datasets are known pairings. We use the same experimental setup as for the Live-seq data. In Figure \ref{fig:synthetic limited pairs}, we report the incorrectly transported mass in the entropy-regularized coupling matrix for the training data as previously, as well as the RMSE error for the observed paired points. We also generate unseen test samples $\Tilde{\mu}, \Tilde{\nu}$ consisting of fully paired points, and report the Sinkhorn divergence $S_\eps(\hat{T} \# \Tilde{\mu}, \Tilde{\nu})$ and the RMSE of the resulting mapping estimator $\hat{T}$ for these newly-sampled points. The results are averaged over 5 different randomly-generated data distributions.

\textbf{Results.} The results are consistent with the observations in the Live-seq experiment. 
Optimizing the mapping appears to result in good learned costs. The amount of mass placed along incorrect directions in the coupling matrix is generally similar to those learned by optimizing the matrix, which are in fact optimized to minimize this objective.

The OT map estimator obtained from optimizing the mapping shows significantly better alignment with the known pairings, which is as expected given that is the objective being optimized. The resulting map estimator also provides better out-of-sample performance on the newly-sampled test points. In contrast, those learned from optimizing the matrix are much less consistent with the known ground-truths and generally perform worse on the out-of-sample points. Optimizing the mapping also appears to result in a mapping giving a lower Sinkhorn divergence when transporting the newly sampled points, demonstrating an improved fit to the target at a distributional level.

We also remark that the entropic mapping obtained from the cost $h(\Phi(x) - \Phi(y))$ learned by optimizing the matrix occasionally failed to give an appropriate mapping (giving very large Sinkhorn divergences between the mapped points and the target distribution). This is presumably because of a poor choice of $\eps$ when constructing the estimator. We disregard such results when calculating the averages in Table \ref{fig:synthetic limited pairs}. In contrast, optimizing the map directly ensured that the final entropic mapping was always reasonable.

\subsection{Inducing properties on the transport map displacements}
As we are optimizing according to the OT mapping estimator itself, we can also optimize to encourage properties we wish to hold on the resulting displacements themselves. This allows us to leverage knowledge about the structure of the mapping as an inductive bias when learning adapted cost functions and OT map estimators. We demonstrate the ability to induce low-rank and $2$-directional displacements by training according to the loss functions proposed in Appendix \ref{sec:properties of map displacements}, with an addition Sinkhorn divergence term $S_\epsilon(\hat{T} \# \mu, \nu)$ to ensure a good fit to the target distribution. We train the map estimators using 3-dimensional empirical measures, each consisting of 128 datapoints, and use a symmetric cost of the form $c(x,y)=h(x-y)$ with $\alpha=0.01$ and an ICNN with hidden dimensions $[64,64]$. Figure \ref{fig:combined_3d} plots the learned mappings applied to 128 newly sampled points, again with the squared-Euclidean entropic map estimator as a comparison. We see that the learned mappings display the desired structural properties.
\begin{figure*}[t]
        \centering
        \includegraphics[width=\linewidth]{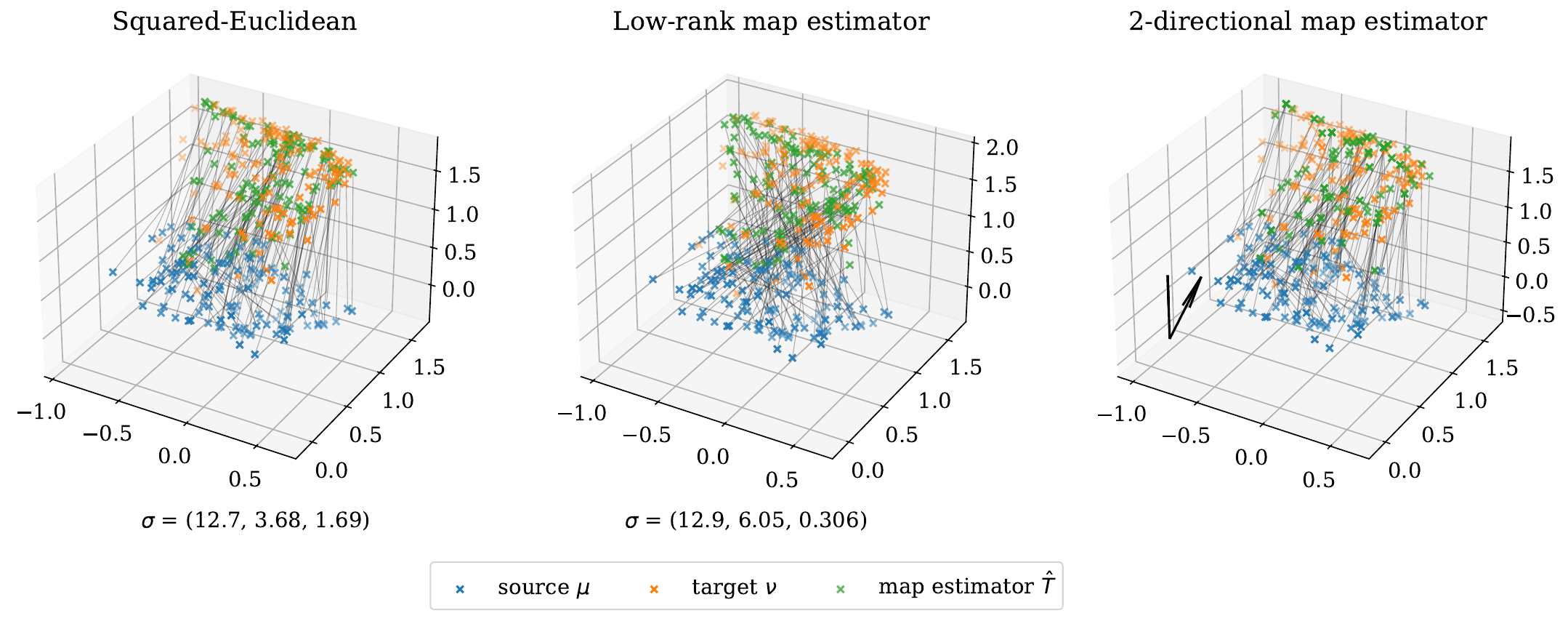}
        \caption{\textit{(middle)} The learned low-rank mapping exhibits displacements primarily along a 2-dimensional plane, as demonstrated by the small final singular value. \textit{(right)} The displacements of the learned 2-directional mapping occur primarily along the displayed directions, which were learned during training.}
        \label{fig:combined_3d}
\end{figure*}

\section{Proofs} \label{sec:proofs}

\gradinv*
\begin{proof}
    Fix $x \in \sR^d$. As $h$ is strictly convex, so is the function $g_x(z) = h(z) - \la z, x \ra$. Denote the unique minimizer of $g_x(z)$ by $z^*(x)$, which is uniquely determined by the first-order optimality condition,
    \begin{equation}
        \nabla g_x(z^*(x)) = \nabla h (z^*(x)) - x = 0.
    \end{equation}
    Rearranging and inverting $\nabla h$, we obtain $(\nabla h)^{-1}(x) = z^*(x)$ as required.
\end{proof}

\GenGM*
\begin{proof}
    Define the push-forward measures $\Tilde{\mu}=\Phi_\mu \# \mu, \Tilde{\nu}=\Phi_\nu \# \nu$ and consider the following two Kantorovich problems, denoting the original problem $(K)$ and a transformed version $(\Tilde{K})$.
    \begin{align}
        & \arginf_{\pi \in \Gamma(\mu,\nu)} \iint_{\sR^d \times \sR^d} h(\Phi_\mu(x) - \Phi_\nu(y)) \rmd \pi(x,y) \tag{$K$} \\
        & \arginf_{\Tilde{\pi} \in \Gamma(\Tilde{\mu},\Tilde{\nu})} \iint_{\sR^d \times \sR^d} h(x-y) \rmd \Tilde{\pi}(x,y) \tag{$\Tilde{K}$} 
    \end{align}

    We can construct a mapping $F: \Gamma(\mu,\nu) \rightarrow \Gamma(\Tilde{\mu},\Tilde{\nu})$ between the respective sets of admissible transport plans defined as $\pi \mapsto \Tilde{\pi} = (\Phi_\mu \otimes \Phi_\nu) \# \pi$. The fact that the coupling $\Tilde{\pi}$ has the correct marginals $\Tilde{\mu}, \Tilde{\nu}$ is a consequence of the definition of push-forward; for a test function $\varphi \in \cC^\infty(\sR^d)$, we have
    \begin{align*}
        \iint_{\sR^d \times \sR^d} \varphi(x) \rmd \Tilde{\pi}(x,y)
            & = \iint_{\sR^d \times \sR^d} \varphi(x) \rmd (\Phi_\mu \otimes \Phi_\nu) \# \pi(x,y) \\
            & = \iint_{\sR^d \times \sR^d} \varphi(\Phi_\mu(x)) \rmd \pi(x,y) \\
            & = \int_{\sR^d} \varphi(\Phi_\mu(x)) \rmd \mu(x) \\
            & = \int_{\sR^d} \varphi(x) \rmd \Tilde{\mu}(x).
    \end{align*}
    This shows that $\Tilde{\pi}$ has correct first marginal $\Tilde{\mu}$ and it can be shown similarly that the second marginal is $\Tilde{\nu}$, confirming that $\Tilde{\pi} \in \Gamma(\Tilde{\mu},\Tilde{\nu})$. Consider too the mapping $G$ given by $\Tilde{\pi} \mapsto \pi = (\Phi_\mu^{-1} \otimes \Phi_\nu^{-1}) \# \Tilde{\pi}$, which defines a map from $\Gamma(\Tilde{\mu},\Tilde{\nu})$ to $\Gamma(\mu,\nu)$. For a test function $\varphi \in \cC^\infty(\sR^d \times \sR^d)$,
    \begin{align*}
        \iint_{\sR^d \times \sR^d} \varphi(x, y) \rmd (\Phi_\mu^{-1} \otimes \Phi_\nu^{-1}) \# (\Phi_\mu \otimes \Phi_\nu) \# \pi (x,y)
            & = \iint_{\sR^d \times \sR^d} \varphi(\Phi_\mu^{-1}(x), \Phi_\nu^{-1}(y)) \rmd (\Phi_\mu \otimes \Phi_\nu) \# \pi (x,y) \\
            & = \iint_{\sR^d \times \sR^d} \varphi((\Phi_\mu \circ \Phi_\mu^{-1})(x), (\Phi_\nu \circ \Phi_\nu^{-1})(y)) \rmd \pi (x,y) \\
            & = \iint_{\sR^d \times \sR^d} \varphi(x, y) \rmd \pi (x,y).
    \end{align*}
    This shows that $G \circ F = Id$, and similarly we can show that $F \circ G = Id$. We thus conclude $G=F^{-1}$, and that $F$ is a bijection between $\Gamma(\mu,\nu)$ and $\Gamma(\Tilde{\mu},\Tilde{\nu})$.

    Define $I_K$ and $I_{\Tilde{K}}$ to be the respective infimums for the Kantorovich problems $(K)$ and $(\Tilde{K})$. For any $\pi \in \Gamma(\mu,\nu)$, the above shows that $\Tilde{\pi} = F(\pi)$ is an admissible transport plan for $(\Tilde{K})$ and thus
    \begin{align} \label{eq:lower bound on I_K}
        I_{\Tilde{K}} \leq \iint_{\sR^d \times \sR^d} h(x-y) \rmd \Tilde{\pi}(x,y)
         & = \iint_{\sR^d \times \sR^d} h(\Phi_\mu(x) - \Phi_\nu(y)) \rmd \pi(x,y).
    \end{align}
    Note that as $\Phi_\mu, \Phi_\nu$ are diffeomorphisms, the transformed Kantorovich problem $(\Tilde{K})$ satisfies the conditions of Theorem \ref{th:Gangbo-McCann} so has a unique solution $\Tilde{\pi}^\star$. Letting $\hat{\pi} = F^{-1}(\Tilde{\pi}^\star)$, we have
    \begin{align}
        \iint_{\sR^d \times \sR^d} h(\Phi_\mu(x) - \Phi_\nu(y)) \rmd \hat{\pi}(x,y)
            & = \iint_{\sR^d \times \sR^d} h(x-y) \rmd \Tilde{\pi}^\star(x,y) = I_{\Tilde{K}}.
    \end{align}
    We see that $\hat{\pi}$ attains the lower bound in (\ref{eq:lower bound on I_K}), and is therefore an optimal plan for the original Kantorovich problem $(K)$.
    
    For uniqueness, note that if there are two such optimal plans $\pi_1^\star, \pi_1^\star \in \Gamma(\mu,\nu)$, then (again using the definition of pushforwards) we have that both $F(\pi_1^\star), F(\pi_2^\star)$ attain the infimum $I_{\Tilde{K}}$. By uniqueness of $\Tilde{\pi}^\star$, we must have $F(\pi_1^\star) = F(\pi_2^\star) = \Tilde{\pi}^\star$, and inverting $F$ we see $\pi_1^\star = \pi_2^\star$ as required.

    The structure of the optimal plan $\pi^\star$ follows from that of $\Tilde{\pi}^\star$. Recall from Theorem \ref{th:Gangbo-McCann} that $\Tilde{\pi}^\star$ is of the form $(Id,\Tilde{T}^\star)$ where $\Tilde{T}^\star = Id - (\nabla h)^{-1} \circ (\nabla f^\star)$ for the Kantorovich potential $f^\star$. Applying the above result, we therefore see that the optimal plan $\pi^\star$ for our original Kantorovich problem $(K)$ is also of the form $(Id, T^\star)$. The map $T^\star$ is now given by
    \[T^\star(x) = \Phi_\nu^{-1} \bigg[ \Phi_\mu(x) - (\nabla h)^{-1} \circ \nabla f^\star \circ \Phi_\mu (x) \bigg],\]
    where $f^\star$ is the Kantorovich potential for the transformed Kantorovich problem $(\Tilde{K})$.
    
\end{proof}

\end{document}